\newtheorem{theorem}{Theorem}
\newtheorem{lemma}{Lemma}
\newcommand{\WinnerTakesAll}{\textsc{\texttt{WinnerTakesAll~}}}
\newcommand{\Bottleneck}{\textsc{\texttt{Bottleneck~}}}
\newcommand{\argmax}{\arg\!\max}
\title{\LARGE \bf
Distributed Simultaneous Action and Target Assignment for Multi-Robot Multi-Target Tracking 
}
\author{Yoonchang Sung, Ashish Kumar Budhiraja, Ryan K. Williams and Pratap Tokekar%
\thanks{The authors are with the Department of Electrical and Computer Engineering, Virginia Tech, USA. \texttt{\small \{yooncs8, ashishkb, rywilli1, tokekar\}@vt.edu}.}%
\thanks{This material is based upon work supported by the National Science Foundation under Grant No. 1637915.}
\thanks{The authors would like to thank Dr. Jukka Suomela from Aalto University for fruitful discussion.}
}
\begin{document}

\maketitle
\thispagestyle{empty}
\pagestyle{empty}

\begin{abstract}

We study two multi-robot assignment problems for multi-target tracking. We consider distributed approaches in order to deal with limited sensing and communication ranges. We seek to simultaneously assign trajectories and targets to the robots. Our focus is on \emph{local} algorithms that achieve performance close to the optimal algorithms with limited communication. We show how to use a local algorithm that guarantees a bounded approximate solution within $\mathcal{O}(h\log{1/\epsilon})$ communication rounds. We compare with a greedy approach that achieves a $2$--approximation in as many rounds as the number of robots. Simulation results show that the local algorithm is an effective solution to the assignment problem.

\end{abstract}

\section{Introduction} \label{sec:int}

We study the problem of assigning robots with limited Field-of-View (FoV) sensors to track multiple moving targets. We focus on scenarios where the number of robots is large and solving the problem locally rather than centrally is desirable. 
The robots may have a limited communication range and bandwidth. 
As such, we seek assignment algorithms that rely on local information and limited, local communication with the neighboring robots. 
We assume that each robot has a number of motion primitives to choose from. The assignment of targets to track is therefore coupled with the selection of motion primitives for each robot. We term this as the distributed Simultaneous Action and Target Assignment (SATA) problem.

A motion primitive is a local trajectory obtained by applying a sequence of actions. We interchangeably use motion primitives to refer to the trajectories as well as the final state on them. 
A motion primitive can track a target if the target is in the FoV of the robot. 
The set of targets tracked by different motion primitives may be different (Figure~\ref{fig:description}). 
Our goal is to assign motion primitives to the robots so as to track the most number of targets. 
This problem can be viewed as a version of set cover~\cite{suomela2013survey} where every target must be covered by at least one motion primitive. However, we have the additional constraint that only one motion primitive can be chosen per robot. This is called as a packing problem~\cite{suomela2013survey}. The combination of these two problems is called a Mixed Packing and Covering Problem (MPCP)~\cite{young2001sequential}. 

\begin{figure}[thpb]
\centering
\includegraphics[width=0.75\columnwidth]{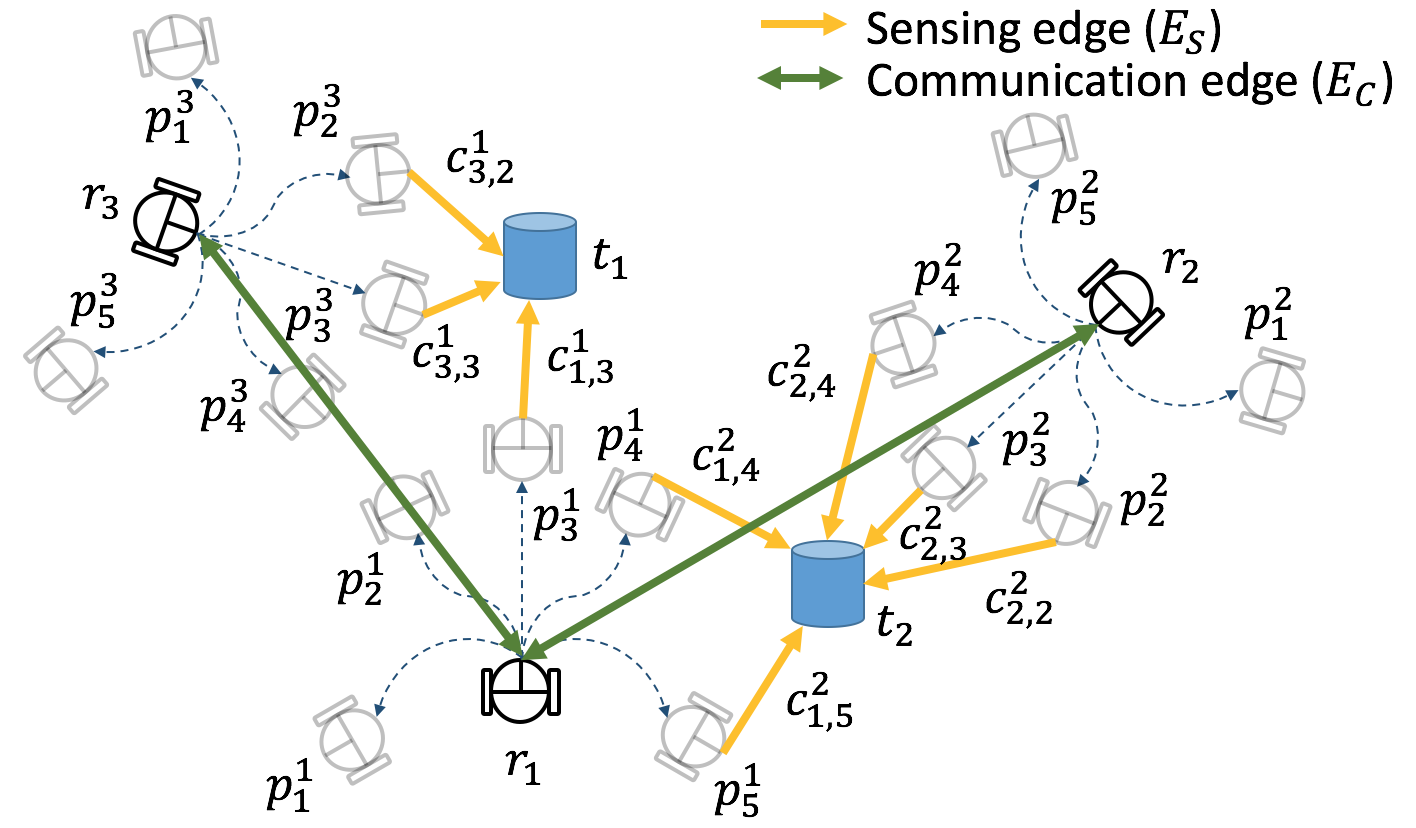}
\caption{Description of multi-robot task allocation for multi-target tracking. There are five motion primitives per robot.}
\label{fig:description}
\end{figure}

The problem can also be formulated as a (sub)modular maximization problem subject to a partition matroid constraint~\cite{nemhauser1978analysis}. A sequential greedy algorithm, where the robots take turns to greedily choose motion primitives, is known to yield a $2$--approximation for this problem~\cite{tokekar2014multi}. The sequential greedy algorithm requires at least as many communication rounds as the number of robots, which may be too slow in practice. Consequently, we focus on \emph{local} algorithms.

\begin{figure}[thpb]
\centering
\includegraphics[width=0.45\columnwidth]{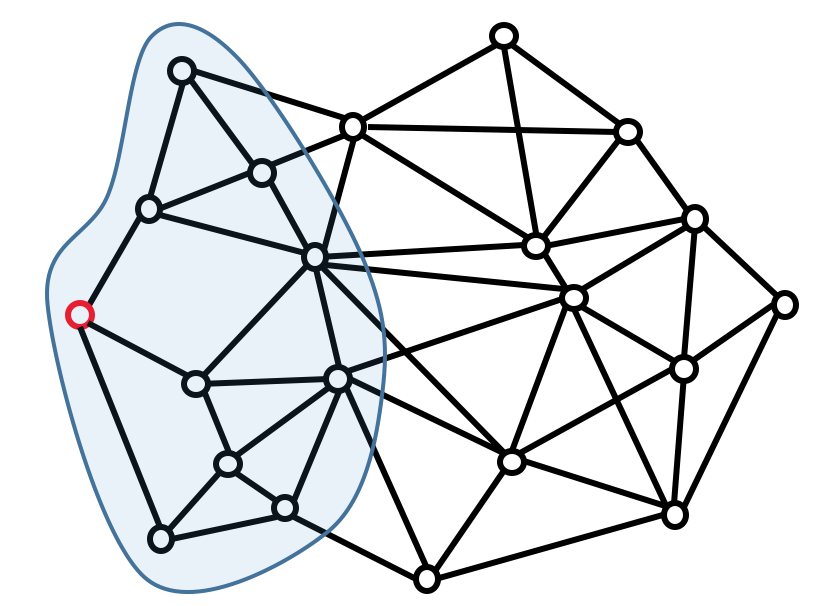}
\caption{Communication graph. The blue region indicates a radius--2 neighborhood of the red node. The red node can be unaware of the entire network topology. A local algorithm that works for the red node only requires a local information of nodes in the blue region. The same local algorithm is applied to all nodes in the network.}
\label{fig:local}
\end{figure}

A local algorithm~\cite{suomela2013survey} is a constant-time distributed algorithm that is independent of the size of a network. This enables a robot to depend only on the local inputs in a fixed-radius neighborhood (Figure~\ref{fig:local}). The robot does not need to know information beyond its local neighborhood, thereby achieving better scalability. Flor{\'e}en et al.~\cite{floreen2011local} proposed a local algorithm to solve MPCP using max-min/min-max Linear Programming (LPs) in a distributed manner. We show how to leverage this algorithm to solve SATA.

There have been many studies~\cite{khan2016cooperative,robin2016multi} on cooperative target tracking in both control and robotics communities. 
Yu et al.~\cite{yu2015cooperative} presented an auction-based decentralized algorithm for cooperative tracking of a mobile target. Capitan et al.~\cite{capitan2013decentralized} proposed a decentralized cooperative multi-robot algorithm using auctioned partially observable Markov decision processes. The performance of decentralized data fusion was successfully shown under limited communication but theoretical bounds on communication rounds were not covered. 

Morbidi et al.~\cite{morbidi2013active} presented a gradient-based control scheme for active multi-target tracking. Their focus was not on distributed control policies. Ahmad et al.~\cite{ahmad2013cooperative} proposed a least squares minimization technique for cooperative multi-target tracking. However, they focused on localization, not on the multi-robot multi-target assignment. Pimenta et al.~\cite{pimenta2009simultaneous} adopted Voronoi partitioning to develop a distributed multi-target tracking algorithm. However, their interest lied in covering an environment coupled with multi-target tracking.

The works in~\cite{kulatunga2006ant},~\cite{bellingham2003multi} and~\cite{eun2009cooperative} proposed algorithms to solve simultaneous task allocation and path planning, similar to SATA. However, their approaches are centralized. 
In our prior work~\cite{tokekar2014multi} we addressed the problem of selecting trajectories for robots that can track the maximum number of targets using a team of robots. No bound on the number of communication rounds was given, possibly resulting in all-to-all communication in the worst case. Instead, we explicitly bound the amount of communication.

Our contributions are as follows: (1) We show how to adapt the local algorithm for solving SATA. (2) We perform empirical comparisons with greedy 
and baseline centralized algorithms. (3) We demonstrate the applicability of the proposed algorithm through simulations.



\section{Problem Description} \label{sec:prob}
Let $R$ and $T$ be sets of robots and targets. $R(k)=\{\textbf{r}_{1}(k),...,\textbf{r}_{i}(k),...,\textbf{r}_{|R|}(k)\}$ denotes the state of robots at time $k$ and $T(k+1)=\{\textbf{t}_{1}(k+1),...,\textbf{t}_{j}(k+1),...,\textbf{t}_{|T|}(k+1)\}$ denotes the predicted state of targets at time $k+1$. We assume that the targets can be uniquely detected and two robots know if they are observing the same target. 
Motion primitives of $i$-th robot $\textbf{r}_{i}(k)$ at time $k$ are denoted by $P^i(k)=\{\textbf{p}_{1}^i(k),...,\textbf{p}_{m}^i(k),...,\textbf{p}_{{|P^i|}}^i(k)\}$. Note again that the term \emph{motion primitives} in this paper represents the future state of a robot after the corresponding feasible control input is applied starting at time $k$. 

We denote the sensing and communication ranges by $\mathcal{RS}$ and $\mathcal{RC}$. Predicted $j$-th target $\textbf{t}_{j}(k+1)$ at time $k+1$ is said to be observable from $m$-th motion primitive of robot $i$ iff $\textbf{t}_{j}(k+1)\in \mathcal{RS}(\textbf{p}_{m}^i(k))$. Likewise, $\alpha$-th robot can communicate with $\beta$-th robot iff $\textbf{r}_{\alpha}(k)\in \mathcal{RC}(\textbf{r}_{\beta}(k))$ and $\textbf{r}_{\beta}(k)\in \mathcal{RC}(\textbf{r}_{\alpha}(k))$. 
We assume that $\mathcal{RC}(\cdot)> 2\mathcal{RS}(\cdot)$. This implies if there is a target $j$ such that $\textbf{t}_{j}(k+1)\in \mathcal{RS}(\textbf{p}_{m}^\alpha(k))$ and $\textbf{t}_{j}(k+1)\in \mathcal{RS}(\textbf{p}_{m}^\beta(k))$, then $\alpha$-th and $\beta$-th robots can communicate with each other. Therefore, neighboring robots can share their local information with each other when they observe the same targets. 

We also assume that all the robots have synchronous clocks leading to synchronous rounds of communication. This is required in order to employ a greedy algorithm and local algorithm that will be covered in Section~\ref{sec:distributed}.


Each robot must choose one of its motion primitives to maximize the tracking objective. We first show how to formulate this as an Integer Linear Program (ILP). We define two binary variables: $x_{m}^{i}$  and $y_{i}^{j}$. $x_{m}^{i}=1$ if $\textbf{p}_m^i$ is selected by $\textbf{r}_i$ and $0$ otherwise. $y_{i}^{j}=1$ if $\textbf{r}_i$ is assigned to $\textbf{t}_j$ and $0$ otherwise. It follows:
\begin{equation}
\sum_{\textbf{p}_m^i\in P^i}{x_{m}^{i}}\leq{1}\ \ \forall \textbf{r}_i\in R,\quad
\sum_{\textbf{r}_i\in R}{y_{i}^{j}}\leq{1}\ \ \forall \textbf{t}_j\in T.
\label{eqn:variable}
\end{equation}
%
%
The objective is to assign the robots/primitives such that all targets are \emph{equitably} covered:
\begin{equation}
\argmax_{x_{m}^{i}}\quad\min_{\textbf{t}_j\in T}\sum_{\textbf{r}_i\in R}\sum_{\textbf{p}_m^i\in P^i}c_{i,m}^jx_{m}^{i},
\label{eqn:bottleneck}
\end{equation}
where $c_{i,m}^j$ denotes weights on sensing edges $E_S$ between $m$-th motion primitive of $i$-th robot and $j$-th target. $c_{i,m}^j$ can represent, for example, the distance between $t_j$ and $p_m^i$. Note that in case $t_j$ and $p_m^i$ have uncertainty associated with them, we can use the Bhattacharyya distance between the corresponding distributions to compute $c_{i,m}^j$. 

Consequently, an optimal motion primitive $\textbf{p}_m^{i*}$ for all robots can be selected based on $x_{m}^{i}$ and $y_{i}^{j}$. We term this as the \Bottleneck version of SATA.

We also define a \WinnerTakesAll variant of SATA where the objective is given by,
\begin{equation}
\argmax_{x_{m}^{i},y_{i}^{j}}\sum_{\textbf{t}_j\in T}\left(\sum_{\textbf{r}_i\in R}y_{i}^{j}\left(\sum_{\textbf{p}_m^i\in P^i}c_{i,m}^jx_{m}^{i}\right)\right).
\label{eqn:objective}
\end{equation} 
Here the goal is  to maximize the quality of tracking (alternatively, number of targets that are tracked).





If we fix $c_{i,m}^j=1$ when $p_m^i$ can observe $t_j$ and zero otherwise, then the objective function becomes equal to the number of targets tracked. 

Both versions of the SATA problem are NP-Hard~\cite{vazirani2001approximation}. The \WinnerTakesAll version can be optimally solved using a Quadratic Mixed Integer Linear Programming (QMILP) solver in the centralized setting. Our main contributions are to show how to solve both problems in a distributed manner: an LP-relaxation of the \Bottleneck variant using a local algorithm; and the \WinnerTakesAll variant using a greedy algorithm. The following theorems summarize the main contributions of our work.

\begin{theorem}~\label{theorem:proposed_local}
Let $\bigtriangleup_R\geq2$ be the maximum number of motion primitives per robot and $\bigtriangleup_T\geq2$ be the maximum number of motion primitives that can see a target. There exists a local algorithm that finds an $\bigtriangleup_R(1+\epsilon)(1+1/h)(1-1/\bigtriangleup_T)$ approximation in $\mathcal{O}(h\log{1/\epsilon})$ synchronous communication rounds for the LP-relaxation of the \Bottleneck version of SATA problem, where $h$ and $\epsilon>0$ are parameters.
\end{theorem}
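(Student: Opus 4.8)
The plan is to cast the LP-relaxation of the \Bottleneck version of SATA as a \emph{max-min linear program} of exactly the form solved locally by Flor\'een et al.~\cite{floreen2011local}, and then invoke their approximation guarantee with the right identification of structural parameters. First I would relax the integrality of the $x_m^i$ to $x_m^i\geq 0$ and rewrite the objective~\eqref{eqn:bottleneck} together with the packing constraint of~\eqref{eqn:variable} by introducing an auxiliary variable $\omega$ for the bottleneck value:
\begin{equation}
\begin{aligned}
&\max\ \omega \quad \text{subject to}\\
&\textstyle\sum_{\textbf{p}_m^i\in P^i} x_m^i \leq 1 \ \ \forall \textbf{r}_i\in R,\\
&\textstyle\sum_{\textbf{r}_i\in R}\sum_{\textbf{p}_m^i\in P^i} c_{i,m}^j x_m^i \geq \omega \ \ \forall \textbf{t}_j\in T,\quad x_m^i\geq 0.
\end{aligned}
\label{eqn:maxmin}
\end{equation}
Here the robots play the role of the packing (``agent'') constraints and the targets play the role of the covering (``objective'') constraints; note that the $y_i^j$ variables do not appear, since the \Bottleneck objective depends only on the $x_m^i$.

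Second, I would match the two degree parameters that govern the bound of~\cite{floreen2011local} to the quantities in the statement. The maximum number of variables in any packing constraint is the maximum number of motion primitives per robot, namely $\bigtriangleup_R$, while the maximum number of variables in any covering constraint is the maximum number of motion primitives that can observe a given target, namely $\bigtriangleup_T$ (both assumed $\geq 2$, so the factor $(1-1/\bigtriangleup_T)$ is well-behaved). Under this correspondence the generic max-min local approximation factor of Flor\'een et al.\ specializes to $\bigtriangleup_R(1-1/\bigtriangleup_T)$, and the round-dependent refinement factors $(1+\epsilon)$ and $(1+1/h)$ yield exactly $\bigtriangleup_R(1+\epsilon)(1+1/h)(1-1/\bigtriangleup_T)$, with the algorithm terminating in $\mathcal{O}(h\log 1/\epsilon)$ synchronous rounds.

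The step I expect to be the crux is verifying that the reduction to~\eqref{eqn:maxmin} is genuinely \emph{local}, i.e.\ that each constraint can be evaluated using only information exchanged within a bounded-radius neighborhood of the communication graph. For this I would invoke the standing assumption $\mathcal{RC}(\cdot)>2\mathcal{RS}(\cdot)$: if a target $\textbf{t}_j$ lies in the sensing region of motion primitives belonging to two robots $\alpha$ and $\beta$, then $\alpha$ and $\beta$ are within communication range, so every covering constraint indexed by a target couples only mutually reachable robots. Consequently the bipartite dependency structure between the $x_m^i$ variables and the target constraints is embedded in the one-hop communication topology, which is precisely the input model assumed by the algorithm of~\cite{floreen2011local}. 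The packing constraints are purely intra-robot and hence trivially local, and the weights $c_{i,m}^j$ are available at both endpoints of each sensing edge, so these checks are routine. Once locality is established, the theorem follows by applying the Flor\'een et al.\ guarantee directly to~\eqref{eqn:maxmin}.
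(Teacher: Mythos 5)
Your proposal is correct and follows essentially the same route as the paper: the paper likewise rewrites the \Bottleneck objective as the max-min LP of Equation~\ref{eqn:mpcp}, identifies $\bigtriangleup_R$ and $\bigtriangleup_T$ with the robot-side and target-side degrees of the tripartite graph, invokes Corollary 4.7 of Flor\'een et al.~\cite{floreen2011local} for the $\bigtriangleup_R(1+\epsilon)(1+1/h)(1-1/\bigtriangleup_T)$ ratio in $\mathcal{O}(h\log 1/\epsilon)$ rounds, and justifies locality of the reduction via the assumption $\mathcal{RC}(\cdot)>2\mathcal{RS}(\cdot)$ together with the robots realizing the primitive and target nodes. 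The only detail the paper makes explicit that you gloss over is the preprocessing step of~\cite{floreen2011local} that splits robot nodes so that $\bigtriangleup_R=2$ before building the layered graph, but since you invoke their algorithm as a black box this does not affect correctness.
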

The proof follows directly from the existence of the local algorithm described in the next section. If $\bigtriangleup_R=1$ or $\bigtriangleup_T=1$, there exist local algorithms that give the optimal solution (c.f. Theorem 1 from~\cite{floreen2011local}).

\begin{theorem}~\label{theorem:proposed_greedy}
There exists a $2$--approximation greedy algorithm for the \WinnerTakesAll version of the SATA problem for any $\epsilon>0$ in polynomial time.
\end{theorem}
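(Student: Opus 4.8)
The plan is to recast the \WinnerTakesAll problem as the maximization of a monotone submodular set function subject to a partition matroid constraint, and then invoke the classical greedy guarantee for that setting. First I would take the ground set $U$ to consist of all candidate motion primitives across all robots, i.e.\ the pairs $(i,m)$ with $\textbf{p}_m^i \in P^i$. A feasible selection picks at most one primitive per robot, so the packing constraint $\sum_{\textbf{p}_m^i\in P^i} x_m^i \le 1$ is exactly the independence condition of the partition matroid whose parts are the primitive sets $P^i$. Thus any greedy rule that respects \emph{one primitive per robot} stays inside this matroid, and the feasible region of~\eqref{eqn:objective} is precisely the family of independent sets.

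The next step is to eliminate the assignment variables $y_i^j$ and express the objective purely as a function $f(S)$ of the chosen primitive set $S \subseteq U$. For a fixed $S$ the constraint $\sum_{\textbf{r}_i \in R} y_i^j \le 1$ means each target may be credited to at most one robot, so the inner maximization over $y$ is solved in closed form by assigning each target to the robot whose selected primitive scores highest on it---this is precisely the \emph{winner takes all} rule. Hence
\[
 f(S) = \sum_{\textbf{t}_j \in T} \max_{(i,m)\in S} c_{i,m}^j ,
\]
with the convention that the maximum over the empty set is $0$. The crux of the argument is to verify that $f$ is monotone and submodular. Monotonicity is immediate, since enlarging $S$ can only raise each per-target maximum. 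For submodularity I would use the elementary fact that a maximum-of-weights function $S \mapsto \max_{e\in S} c_e$ has diminishing returns: for $A \subseteq B$ and a new element $x$, its marginal gain $\max\!\left(0,\, c_x - \max_{e\in A}c_e\right)$ is at least $\max\!\left(0,\, c_x - \max_{e\in B}c_e\right)$ because the incumbent maximum only grows with the set. Summing these per-target submodular functions over $\textbf{t}_j \in T$ preserves submodularity, establishing the claim for $f$.

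With $f$ monotone submodular and the feasible region a partition matroid, the final step is to apply the classical result of Nemhauser, Wolsey and Fisher~\cite{nemhauser1978analysis}: the greedy algorithm that repeatedly adds the element of largest marginal gain subject to matroid independence returns a solution of value at least $\tfrac{1}{2}\,\mathrm{OPT}$, equivalently a $2$--approximation. Concretely, this is realized by the sequential greedy scheme in which the robots take turns and each selects the primitive maximizing its marginal contribution to $f$ given the choices of the earlier robots~\cite{tokekar2014multi}; a single pass of $|R|$ turns suffices, and each turn evaluates at most $\bigtriangleup_R$ marginal gains, so the running time is polynomial. The stated factor $2$ does not depend on $\epsilon$, which enters only through numerical tolerances in the selection step; I expect the only genuinely delicate point to be the submodularity verification above, since everything else is a direct instantiation of the matroid-constrained greedy framework.
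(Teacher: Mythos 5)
Your proof is correct and follows essentially the same route as the paper, which disposes of this theorem in a single line by casting it as (sub)modular maximization subject to a partition matroid constraint and citing the classical greedy guarantee of Nemhauser, Wolsey and Fisher. Your explicit elimination of the $y_i^j$ variables and verification that $f(S)=\sum_{\textbf{t}_j\in T}\max_{(i,m)\in S} c_{i,m}^j$ is monotone submodular is in fact more careful than the paper's statement, which loosely calls the objective ``modular'' even though it is the submodularity of the per-target maximum that makes the factor $1/2$ (rather than exact optimality) the right guarantee.
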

This  follows from the fact that this is a modular maximization problem subject to a partition matroid constraint~\cite{nemhauser1978analysis}.

\section{Distributed Algorithms} \label{sec:distributed}

\subsection{Local Algorithm} \label{subsec:local}


In this section, we show how to solve the $\textsc{\texttt{Bottleneck~}}$ version of the SATA problem using a local algorithm. We adapt the local algorithm for solving max-min LPs~\cite{floreen2011local} to solve the SATA problem in a distributed manner. 

Consider the tripartite, weighted, and undirected graph, $\mathcal{G}=(R\cup{P}\cup{T},E)$ shown in Figure~\ref{fig:general}. Each edge $e\in{E}$ is either $e=\{\textbf{r}_i,\textbf{p}_m^i\}$ with weight 1 or $e=\{\textbf{t}_j,\textbf{p}_m^i\}$ with weight $c_{i,m}^{j}\in{C}$. The maximum degree among robot nodes $\textbf{r}_i\in{R}$ is denoted by $\bigtriangleup_R$ and among target nodes $\textbf{t}_j\in{T}$ is $\bigtriangleup_T$. Each motion primitive $\textbf{p}_m^i\in{P}$ is associated with a variable $x_{m}^{i}$.
The upper part of $\mathcal{G}$ in Figure~\ref{fig:general} is related with a packing problem (Equation~\ref{eqn:objective}). The lower part is related with the covering problem. The \Bottleneck version (Equation~\ref{eqn:bottleneck}) can be rewritten as a linear relaxation of ILP:
\begin{equation}
\begin{split}
\mbox{maximize}\ \ \ &w \\
\mbox{subject to}\ \ \ &\sum_{\textbf{p}_m^i\in{P^i}}x_{m}^{i}\leq{1}\ \ \forall{\textbf{r}_i\in{R}} \\
&\sum_{\textbf{r}_i\in{R}}\sum_{\textbf{p}_m^i\in{P^i}}c_{i,m}^{j}x_{m}^{i}\geq{w}\ \ \forall{\textbf{t}_j\in{T}} \\
&\ \ \ \ \ \ \ \ \ \ x_{m}^{i}\geq{0}\ \ \forall{\textbf{p}_m^i\in{P^i}}.
\end{split}
\label{eqn:mpcp}
\end{equation}

\begin{figure}[thpb]
\centering
\includegraphics[width=0.80\columnwidth]{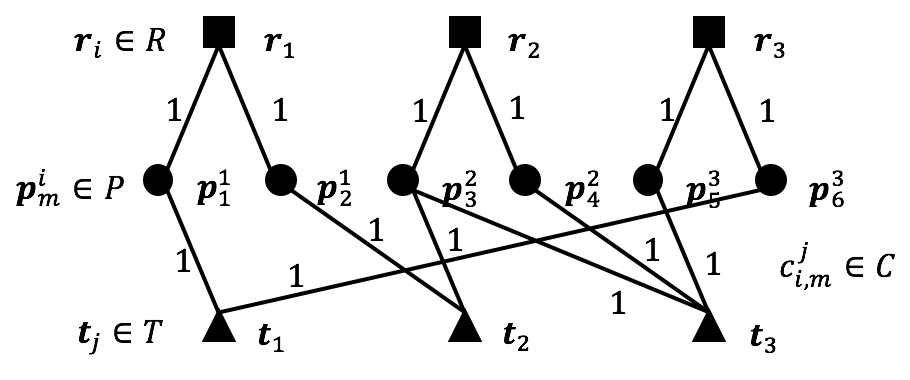}
\caption{One instance of a graph for MPCP when there are three robot nodes, six motion primitive nodes and three target nodes. The objective in this example is to maximize the number of targets being tracked. Hence, all weights are set to $1$. In general, weights can be arbitrary values.}
\label{fig:general}
\end{figure}

Floreen et al.~\cite{floreen2011local} presented a local algorithm to solve MPCP in Equation~\ref{eqn:mpcp} in a distributed fashion. 
We show how to adapt this to the \Bottleneck version of SATA.
An overview of our algorithm is given in Figure~\ref{fig:flowchart}. We describe the main steps in the following.

\begin{figure}[thpb]
\centering
\includegraphics[width=0.8\columnwidth]{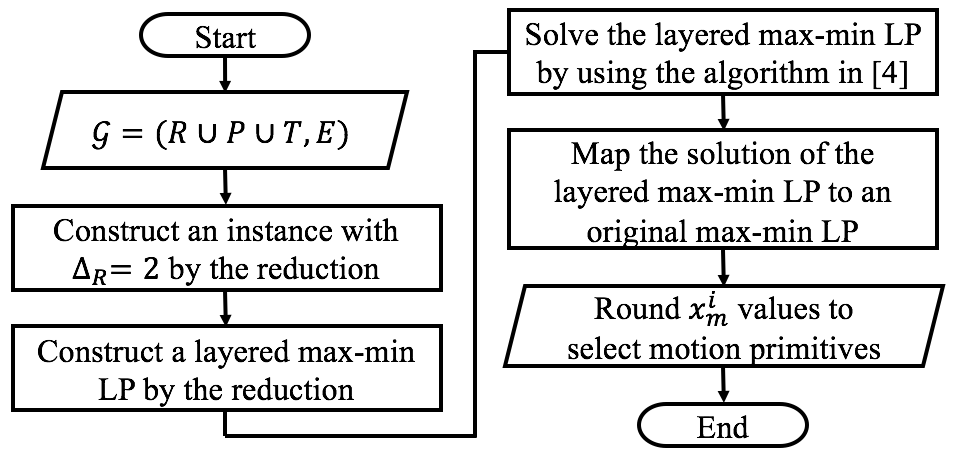}
\caption{Flowchart of the proposed local algorithm.}
\label{fig:flowchart}
\end{figure}

\subsubsection{Local Algorithm from~\cite{floreen2011local}}
The local algorithm in~\cite{floreen2011local} requires~$\bigtriangleup_R=2$. However, they also present a simple local technique to split nodes in the original graph with $\bigtriangleup_R > 2$ into multiple nodes making $\bigtriangleup_R = 2$. Then, a \emph{layered} max-min LP  is constructed with $h$ layers (Figure~\ref{fig:layered}). The details of the construction of the layered graph is given in Section 4 of~\cite{floreen2011local}. $h$ is a user-defined parameter that  trades-off computational time with optimality. Layered graph breaks the symmetry that inherently exists in an original graph. 

\begin{figure}[thpb]
\centering
\includegraphics[width=0.55\columnwidth]{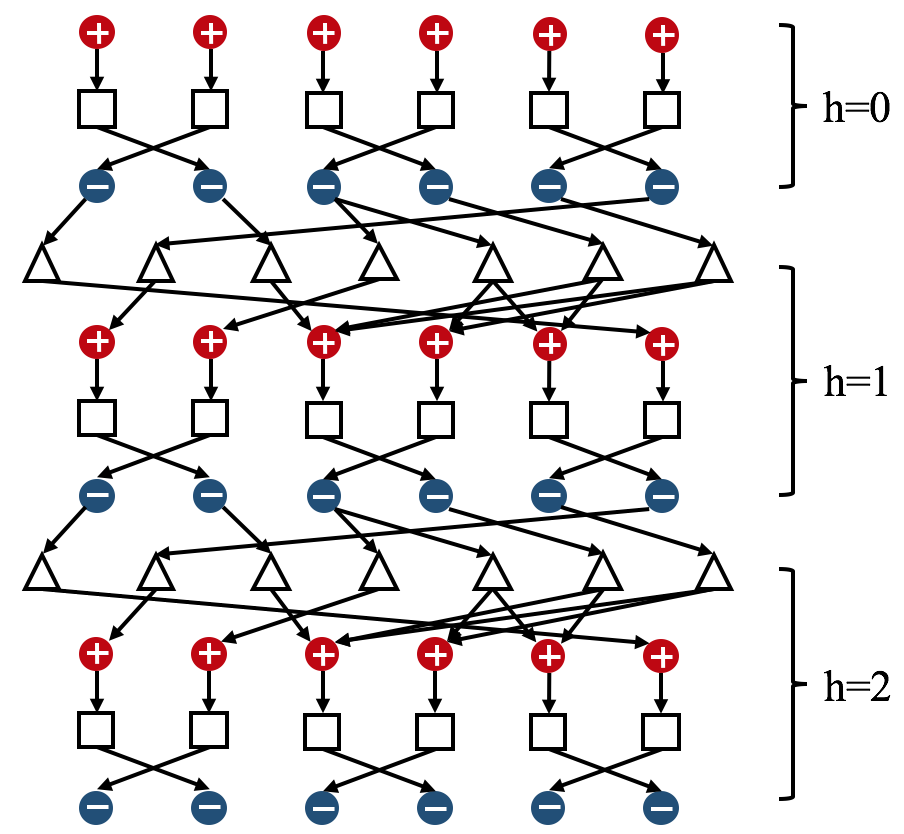}
\caption{Graph of the layered max-min LP with $h=2$ that is obtained from the original graph of Figure~\ref{fig:general} after applying the local algorithm.}
\label{fig:layered}
\end{figure}

Authors in~\cite{floreen2011local} proposed a recursive algorithm to compute a solution of the layered max-min LP. The solution for the original max-min LP can be obtained by mapping from the solution of the layered one. The obtained solution corresponds to values of $x_m^i$. They proved that the resulting algorithm gives a constant-factor approximation ratio.
\begin{theorem}
There exist local approximation algorithms for max-min and min-max LPs with the approximation ratio $\bigtriangleup_R(1+\epsilon)(1+1/h)(1-1/\bigtriangleup_T)$ for any $\bigtriangleup_R\geq2$, $\bigtriangleup_T\geq2$, and $\epsilon>0$, where $h$ denotes the number of layers.
\end{theorem}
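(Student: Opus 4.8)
The plan is to prove the result by explicit construction of a local algorithm for the max-min LP of Equation~\ref{eqn:mpcp} and then bounding the quality of the solution it returns; since the min-max case is dual, I would treat only max-min in detail and transfer the construction at the end. The governing difficulty is that a local algorithm must commit to every variable $x_m^i$ using only information within a fixed-radius neighborhood, yet the agent--customer (robot/primitive versus target) structure is \emph{locally symmetric}: two primitives with identical neighborhoods are indistinguishable to a local rule, while a good solution may need to treat them asymmetrically. Every step below is aimed at breaking this symmetry within a constant radius while controlling the induced loss.

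First I would reduce a general instance to the case $\bigtriangleup_R = 2$. Following the node-splitting device of \cite{floreen2011local}, a robot node of degree $d > 2$ is replaced by a short path of degree-$2$ copies joined through auxiliary nodes, arranged so that the feasible region of the split LP projects onto that of the original and the packing constraint $\sum_m x_m^i \le 1$ is preserved. I would check that this transformation is itself local (each node performs it from its own degree and identifier) and that an $\alpha$-approximate solution of the split instance maps back to an $\alpha$-approximate solution of the original, so that it suffices to handle the degree-$2$ case. Next I would build the \emph{layered} graph with $h$ layers (Figure~\ref{fig:layered}), the depth-$h$ unfolding of the degree-$2$ instance into an acyclic, tree-like structure in which symmetry is broken by position within the layering. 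On this layered instance the LP has a recursive structure: because each agent touches only two customers, the optimal layered solution can be assembled by a recursion that proceeds outward from the innermost layer and depends only on a radius-$\mathcal{O}(h)$ neighborhood, hence is computable locally in $\mathcal{O}(h \log 1/\epsilon)$ rounds, with the $(1+\epsilon)$ factor and the $\log 1/\epsilon$ dependence arising from solving the local LP to relative accuracy $\epsilon$. I would then \emph{map back} by averaging the layered variables over the copies of each original node, and verify that the average is feasible for the original packing constraints (by convexity) and attains some covering value $w$ that I can compare to the true optimum $w^*$.

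The main obstacle, and where I expect the real work to lie, is the approximation analysis: showing that the mapped-back solution satisfies $w \ge w^* / \big(\bigtriangleup_R(1+\epsilon)(1+1/h)(1-1/\bigtriangleup_T)\big)$. I would split the loss into two pieces. The factor $(1+1/h)$ is a truncation/boundary term measuring how much covering weight can be lost at the outermost layer of the finite unfolding; I would control it by a telescoping estimate across the $h$ layers, showing the boundary contributes at most a $1/h$ fraction. The factor $\bigtriangleup_R(1-1/\bigtriangleup_T)$ is the intrinsic symmetry gap: because a locally symmetric rule must spread mass roughly evenly across the up-to-$\bigtriangleup_R$ primitives of a robot while an optimal solution may concentrate it, and because each target is shared by at most $\bigtriangleup_T$ primitives, the worst-case ratio works out to exactly $\bigtriangleup_R(1-1/\bigtriangleup_T)$. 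I would establish this through LP duality or an explicit fractional covering certificate, and then exhibit a matching lower-bound instance to confirm the constant is tight; this tightness is what makes the bookkeeping delicate. Finally, I would invoke the symmetry between packing and covering to carry the entire construction and bound over to the min-max LP, completing the proof.
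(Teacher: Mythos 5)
First, a framing point: the paper does not actually prove this theorem --- its entire ``proof'' is a pointer to Corollary 4.7 of~\cite{floreen2011local}. So your proposal is not competing with an in-paper argument; it is a blind reconstruction of the external one. At the level of architecture you reconstruct it faithfully: reduce to $\bigtriangleup_R=2$ by node splitting, unfold into an $h$-layer graph to break local symmetry, solve the layered LP by a recursion over a radius-$\mathcal{O}(h)$ neighborhood, map back by averaging over copies, and account for the loss in two pieces (a $(1+1/h)$ boundary term and a $\bigtriangleup_R(1-1/\bigtriangleup_T)$ term, with tightness coming from a matching lower-bound instance). That is indeed how the cited proof is organized.

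There is, however, a concrete flaw in your accounting of where the leading factor $\bigtriangleup_R$ comes from. You assert that the splitting of a degree-$d$ robot node into degree-$2$ copies is approximation-ratio-preserving (``an $\alpha$-approximate solution of the split instance maps back to an $\alpha$-approximate solution of the original''), and you then attribute the entire factor $\bigtriangleup_R(1-1/\bigtriangleup_T)$ to the symmetry gap of the local rule on the (already split) instance. These two claims cannot both hold: on the split instance $\bigtriangleup_R=2$, so a ratio-preserving reduction would yield a final bound of $2(1+\epsilon)(1+1/h)(1-1/\bigtriangleup_T)$ for \emph{every} original instance, independent of its true $\bigtriangleup_R$ --- which is not the theorem. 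The splitting is precisely where the $\bigtriangleup_R$-dependence enters: each degree-$2$ copy receives its own packing budget of $1$, so the split LP relaxes the original constraint $\sum_m x_m^i\le 1$ to a total budget of roughly $\bigtriangleup_R/2$ per original robot, and restoring feasibility on the way back costs a factor of about $\bigtriangleup_R/2$, which multiplies the $2(1-1/\bigtriangleup_T)$ ratio of the degree-$2$ algorithm to give $\bigtriangleup_R(1-1/\bigtriangleup_T)$. Beyond this misattribution, the load-bearing quantitative steps --- the telescoping bound giving $(1+1/h)$, the $(1-1/\bigtriangleup_T)$ analysis of the averaged solution against the covering constraints of Equation~\ref{eqn:mpcp}, and the lower-bound construction --- are named but deferred, so as it stands the proposal is a correct map of the proof rather than a proof.
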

\begin{proof}
Please refer to Corollary 4.7 from~\cite{floreen2011local} for a proof.
\end{proof}
Each node in the layered graph carries out its local computation. Each node also receives and sends information from and to neighbors at each synchronous communication round. The layered graph is constructed in a local fashion without requiring any single robot to know the entire graph.

\subsubsection{Realization of Local Algorithm for SATA} \label{subsubsec:realize}

To apply the local algorithm to distributed SATA, each node and edge in a layered graph must be realized at each time step. In our case, the only computational units are the robots. Nodes that correspond to motion primitives, $\textbf{p}_m^i\in{P}$, can be realized by the corresponding robot $\textbf{r}_i\in{R}$. Nodes corresponding to the targets are also realized by the robots. A target $\textbf{t}_j$ is realized by a robot $\textbf{r}_i$ satisfying $\textbf{t}_{j}\in \mathcal{RS}(\textbf{p}_{m}^i)$. If there are multiple robots whose motion primitives can sense the target, they can arbitrarily decide who realizes the target node in a constant number of communication rounds. 


After applying the local algorithm, each robot obtains $x_{m}^{i}$ for corresponding $\textbf{p}_m^i$. However, due to the LP relaxation, $x_{m}^{i}$ will not necessarily be binary. For each robot we set the highest $x_{m}^{i}$ equal to one and all others as zero. We shortly show that the resulting solution after rounding is still close to optimal in practice. Furthermore, increasing $h$ results in better solutions at the expense of more communication. $h=0$ is equivalent to the greedy approach where no robots communicate with each other. 


The following table shows the result of applying the local algorithm to the graph in Figure~\ref{fig:general}. Three different values for $h$ were tested: $2$, $10$, and $30$. In all cases, $\textbf{p}_3^2$ and $\textbf{p}_6^3$ have larger values of $x_p$ than other nodes. Thus, the robot $\textbf{r}_2$ and the robot $\textbf{r}_3$ will select $\textbf{p}_3^2$ and $\textbf{p}_6^3$ as motion primitives, respectively after employing a rounding technique to $x_p$'s. 

As the number of layers increases, the more distinct the $x_p^i$ values returned by the algorithm. Another interesting observation is that robot $\textbf{r}_1$ has the same equal value on both motion primitives of its own no matter how many number of layers is used. This is because all the targets are already observed by robots $\textbf{r}_2$ and $\textbf{r}_3$ with higher values. 

\begin{table}[h]
\centering
\begin{center}
\begin{tabular}{ c c c c c } 
 \hline
 \hline \\[-1em]
 $\textbf{p}_m^i $ & $x_{m}^{i}$ & $h = 2$\ & $h = 10$ &
$h = 30$ \\ \\[-1em]
 \hline
 \hline \\[-1em]
 $\textbf{p}_1^1$ & $x_1^1 = $ & 0.5000 & 0.5000 & 0.5000 \\ \\ \\[-2em]

 $\textbf{p}_2^1$ & $x_2^1 = $ & 0.5000 & 0.5000 & 0.5000 \\ \\[-1em]
 \hline \\[-1em]
 $\textbf{p}_3^2$ & $x_3^2 = $ & 0.6667 & 0.7591 & 0.7855 \\ \\ \\[-2em]

 $\textbf{p}_4^2$ & $x_4^2 = $ & 0.3333 & 0.2409 & 0.2145 \\ \\[-1em]
 \hline \\[-1em]
 $\textbf{p}_5^3$ & $x_5^3 = $ & 0.3333 & 0.2409 & 0.2145 \\ \\ \\[-2em]

 $\textbf{p}_6^3$ & $x_6^3 = $ & 0.6667 & 0.7591 & 0.7855 \\ \\[-1em]
 \hline
 \hline
\end{tabular}
\end{center}
\caption{Solution returned by the local algorithm for the example shown in Figure~\ref{fig:general} with the varying number of layers, $h$. }
\label{table:1}
\end{table}

Algorithm~\ref{alg:local} explains the overall scheme of each robot for a distributed SATA. We solve the SATA problem at each time step. In principle, we can replace each motion primitive with a longer horizon trajectory and plan for multiple time steps without affecting the computation time significantly.


\begin{algorithm} \label{alg:local}
    \SetKwInOut{Input}{Input}
    \SetKwInOut{Output}{Output}
    \For{$\textbf{r}_{i,k}\in R_k$}
    {
      $\textbf{p}_{m,k}^i\in P_k^i\leftarrow$ComputeMotionPrimitives($\textbf{r}_{i,k}$)
      
      Find targets that can be sensed by $\textbf{p}_{m,k}^i$
      
      Construct a $h$-hop communication graph
      
      Apply local algorithm
      
      $\hat{x}_{m}^{i}\leftarrow$ Rounding\big($x_{m}^{i}$\big)
      
      $\textbf{p}_m^{i*}\leftarrow$ Motion primitive with $\hat{x}_{m}^{i} = 1$
      
      ApplyAction\big($\textbf{p}_m^{i*}$\big)
      
      $k\leftarrow k+1$
      
    }
    \caption{Local algorithm}
\end{algorithm}

One of the advantages of the local algorithm is that even if the communication graph is disconnected, each component of the graph can run the local algorithm independently without affecting the solution quality. 
The algorithm also allows for the number of robots and targets to change. Since each robot determines its neighbors at each time step, any new robots or targets will be identified and become part of the local layered graphs at the next planning timestep. 

\subsection{Greedy Algorithm} \label{subsec:greedy}

We use the greedy algorithm proposed in~\cite{tokekar2014multi} and~\cite{dames2015detecting} as the baseline comparison. The greedy algorithm requires a specific ordering of the robots given in advance. The first robot greedily chooses a motion primitive that can maximize the number of targets being observed. Those observed targets are removed from the consideration. Then, the second robot makes its choice; this repeats for the rest of robots. Note again that the greedy algorithm is for the \WinnerTakesAll version of SATA.

\begin{algorithm} \label{alg:greedy}
    \SetKwInOut{Input}{Input}
    \SetKwInOut{Output}{Output}
    \Input{Order of robots $R$}
    
    Initialize $w(\textbf{t}_j)=0\ \forall \textbf{t}_j\in T$

\For{$\textbf{r}_{i}\in R$}
{
\For{$\textbf{p}_{m}^i\in P^i$}
{
Compute $c_{i,m}^j$
$w^\prime(\textbf{p}_{m}^i)=\sum_{\textbf{t}_{j}}\max\{w(\textbf{t}_{j}),c_{i,m}^j\}$
}

Determine $x_m^i=\argmax w^\prime(\textbf{p}_m^i)\ \forall \textbf{p}_m^i\in P^i$

Update $w(\textbf{t}_j)=\max\{w(\textbf{t}_j),c_{i,m}^j\}\ \forall \textbf{t}_j\in T$

}
$y_i^j\leftarrow0\ \forall \textbf{r}_{i}\in R,\ \textbf{t}_j\in T$

\For{$\textbf{t}_j\in T$}
{
$\textbf{r}_i^*\leftarrow\argmax_{\textbf{r}_i\in R}\sum_{\textbf{p}_m^i}c_{i,m}^jx_m^i$

$y_{i^*}^j\leftarrow 1$
}
    \caption{Greedy algorithm}
\end{algorithm}

As shown in Algorithm~\ref{alg:greedy}, the greedy algorithm runs in $|R|$ communication rounds at each time step. We define two functions: $w(\textbf{t}_j)$ gives a quality of tracking for $j$-th target; and $w^\prime(\textbf{p}_m^i)$ gives the sum of quality of tracking over all feasible targets using $m$-th motion primitive of $i$-th robot. If, for example, $c_{i,m}^j$ is used as a distance metric, the $\max$ ensures that the quality of tracking for $j$-th target is only given by the distance of the nearest robot/primitive. That is, even if multiple primitives can track the same $\textbf{t}_j$, when counting the quality we only care about the closest one. The total quality will then be the sum of qualities for each target.

\begin{lemma}~\label{lemma:greedy}
Greedy algorithm of Algorithm~\ref{alg:greedy} gives a feasible solution for the \WinnerTakesAll version of SATA.
\end{lemma}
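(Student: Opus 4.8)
The plan is to verify directly that the variables $x_m^i$ and $y_i^j$ returned by Algorithm~\ref{alg:greedy} satisfy every constraint defining the \WinnerTakesAll version of SATA, namely the binary integrality of all variables together with the two constraints of Equation~\ref{eqn:variable}. Since feasibility is a structural property of the output rather than a claim about its value, the argument amounts to inspecting each assignment step of the algorithm and checking that no constraint is ever violated; it does not require reasoning about the objective in Equation~\ref{eqn:objective}.

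First I would analyze the outer loop over robots. For a fixed robot $\textbf{r}_i$, the algorithm evaluates the marginal quality $w'(\textbf{p}_m^i)$ for each of its motion primitives and then sets $x_m^i$ through a single $\argmax$ over $P^i$. By construction this selects exactly one primitive per robot, assigning it the value $1$ and leaving the remaining primitives of that robot at $0$; I would fix an arbitrary deterministic tie-breaking rule so that the $\argmax$ is well-defined even when several primitives attain the maximum. It follows that every $x_m^i\in\{0,1\}$ and $\sum_{\textbf{p}_m^i\in P^i} x_m^i = 1 \leq 1$, so the per-robot constraint in Equation~\ref{eqn:variable} holds for every $\textbf{r}_i\in R$.

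Next I would turn to the target loop. The variables $y_i^j$ are first initialized to $0$ for all robot--target pairs. Then, with the $x_m^i$ already fixed, the algorithm processes each target $\textbf{t}_j$ independently, selecting $\textbf{r}_{i^*}=\argmax_{\textbf{r}_i\in R}\sum_{\textbf{p}_m^i} c_{i,m}^j x_m^i$ and raising only $y_{i^*}^j$ to $1$. Hence for each target at most one $y_i^j$ equals $1$, giving $y_i^j\in\{0,1\}$ and $\sum_{\textbf{r}_i\in R} y_i^j \leq 1$ for every $\textbf{t}_j\in T$, which is exactly the per-target constraint of Equation~\ref{eqn:variable}. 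Since both constraints and the integrality requirements are met, the returned pair $(x,y)$ is a feasible point of the \WinnerTakesAll program.

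The one point I would treat carefully is the degenerate case in which a target $\textbf{t}_j$ is not observed by any robot under the chosen primitives, so that $\sum_{\textbf{p}_m^i} c_{i,m}^j x_m^i = 0$ for every robot. Here the $\argmax$ still returns some robot and sets one $y_{i^*}^j = 1$; I would observe that this does not break feasibility, since $\sum_{\textbf{r}_i\in R} y_i^j = 1 \leq 1$ still holds and the assignment simply contributes nothing to Equation~\ref{eqn:objective} (equivalently, one may leave all $y_i^j = 0$ for such a target, which is also feasible). This tie-breaking and degeneracy bookkeeping, rather than any genuine combinatorial difficulty, is the main thing to get right; the substantive statement about the quality of the output is the $2$--approximation of Theorem~\ref{theorem:proposed_greedy}, not feasibility.
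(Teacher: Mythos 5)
Your proof is correct, but it takes a genuinely different route from the paper's. You read ``feasible'' literally and verify constraint satisfaction: exactly one $x_m^i$ per robot is set to $1$ by the $\argmax$ over $P^i$, at most one $y_{i^*}^j$ per target is raised to $1$ in the final loop, all variables are binary, and hence both constraints of Equation~\ref{eqn:variable} hold; your handling of ties and of targets unobserved under the chosen primitives is careful and correct. The paper's appendix proof does something quite different: it never checks the constraints at all. Instead it defines $w(\textbf{t}_j)=\max\{c_{i,m}^j \mid x_m^i=1\}$ and shows through a chain of equalities that $\sum_{\textbf{t}_j}w(\textbf{t}_j)$ --- the quantity the greedy algorithm actually accumulates and maximizes via $w'$ --- coincides with the \WinnerTakesAll objective of Equation~\ref{eqn:objective} evaluated at the output $(x,y)$, using the fact that lines 10--13 of Algorithm~\ref{alg:greedy} set $y_i^j$ to pick out the maximizing robot. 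In other words, the paper is really establishing that the greedy's internal bookkeeping measures the right objective, which is the ingredient needed for the approximation claim of Theorem~\ref{theorem:proposed_greedy}, whereas you establish that the output is a legal point of the program. Your argument is the more faithful proof of the lemma as stated and is self-contained; the paper's argument buys the objective-consistency link that your proof deliberately sets aside. A fully satisfying treatment would include both halves, so you may want to append a sentence noting that the value reported by the algorithm equals the objective of Equation~\ref{eqn:objective} at the returned $(x,y)$.
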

The proof is given in the appendix.

A centralized-equivalent approach is one where the robots all broadcast their local information until some robot has received information from all others. This robot can obtain a centralized solution. A centralized-equivalent approach for a complete communication runs in 2 communication rounds for receiving and sending data to neighbors. However, the local algorithm and greedy algorithm take $h\log(1/\epsilon)$ and $|R|$ communication rounds, respectively. Note that $h\ll|R|$ for most practical cases. 

\section{Simulations} \label{sec:sim}
In this section, we evaluate empirically the performance of the local algorithm and greedy algorithm in two settings. 




\subsection{Comparison Study}

We compare the proposed algorithms with the QMILP solution. The greedy algorithm and QMILP solve the \WinnerTakesAll problem and local algorithm solves the \Bottleneck problem. However, we compare the total number of targets covered by both approaches.
We used TOMLAB~\cite{QMILP} to get the QMILP solution. TOMLAB works with MATLAB and uses IBM's CPLEX optimizer. An Intel® Core™ i7-5500U CPU @ 2.40GHz~x~4 laptop with 16 GB memory took a maximum time of around 4 seconds to solve an instance with 200 targets and average target degree of 2. Most of instances were solved in less than 2 seconds.

We randomly generated graphs similar to Figure~\ref{fig:general} with a given average degree for comparison. We start with the upper half of the graph, connecting each robot to its two motion primitives. Then we iterate through each motion primitive and randomly choose a target node to create an edge. Next, we iterate through target nodes and randomly choose a motion primitive to create an edge. We also add random edges to connect disconnected components (to keep the implementation simpler). We repeat this in order to get the required graph. We create new edges to random primitives till we achieve the desired degree. We generated cases by varying the degree of targets, number of targets, and number of robots using the method described above.

Figure~\ref{fig:cmp_plot} shows the minimum, maximum, and the mean number of targets covered by the local algorithm, greedy algorithm and QMILP running 100 random instances for every setting of the parameters. We also show the number of targets covered when choosing motion primitives randomly as a baseline. We observe that the local algorithm with $h=2$ performs comparatively to the optimal algorithm, and is always better than the baseline. In all the figures, $\Delta_R=2$.

\begin{figure*}[bth]
\centering
\includegraphics[width=0.75\textwidth]{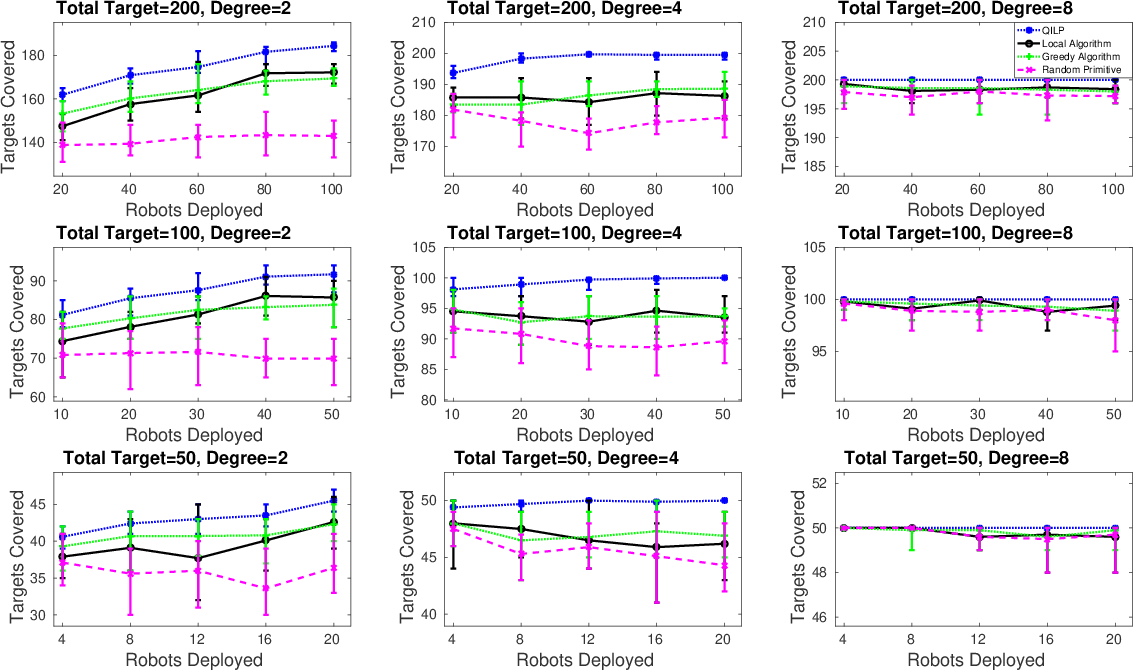}
\caption{Comparative results of QMILP, local algorithm with $h$=2, greedy algorithm, and randomly choosing a motion primitive. The plots show the mean and min/max number of targets covered from 100 trials.\label{fig:cmp_plot}}
\end{figure*}

When the number of targets are 50 and 100 with degree 4 (Figure~\ref{fig:cmp_plot}), the performance of the local algorithm does not improve as the number of robots deployed increases, which may seem counterintuitive. We conjecture that the reason behind this is the locality of the proposed algorithm. Even though more robots are used to track the same number of targets, the average degree of the target remains the same. Consequently, the communication graph for the robots becomes sparser. Since $h$ is fixed for all cases, this implies that each robot in layered graph reaches a smaller subset of the total graph, leading to even more sub-optimal performance. One avenue of future work is to analyze this in more depth.


\subsection{Multi-robot Multi-target Tracking Simulation}

The proposed local algorithm was implemented in Gazebo (Figure~\ref{fig:gazebo}). Five mobile robots were deployed to track thirty targets (a subset of which were mobile) with a FoV of 3$m$ on the ground plane. Two motion primitives were used per robot: (1) remain in place and (2) move a random distance of up to $1m$ with a random heading between $-30^{\circ}$ and $30^{\circ}$. 

\begin{figure}[thpb]
\centering
\includegraphics[width=0.65\columnwidth]{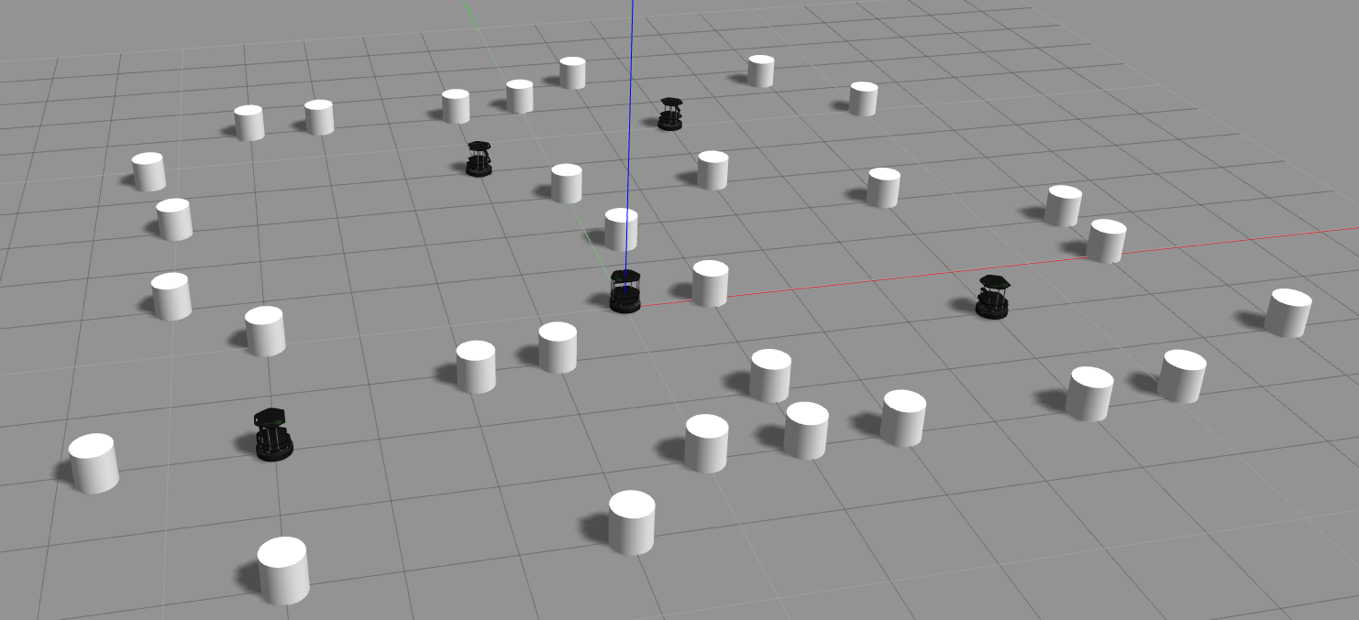}
\caption{Gazebo simulator showing five robots tracking thirty stationary and moving targets (please refer to the attached video).} 
\label{fig:gazebo}
\end{figure}

At each time step, the local algorithm chooses motion primitives to maximize the number of targets tracked (\Bottleneck version). We compared this with the greedy algorithm.
Figure~\ref{fig:trajectory} shows the trajectories of robots and targets obtained from the simulation. Figure~\ref{fig:total_num} compares the number of targets tracked by the local and greedy algorithms for a specific instance. Both algorithms have a sub-optimal performance guarantee, with the greedy algorithm having a better worst-case guarantee than the local one. However, in practice, both strategies perform comparably.

\begin{figure}[htb]
\centering
\subfigure[Trajectory information obtained by the local algorithm.]{\includegraphics[width=0.43\columnwidth]{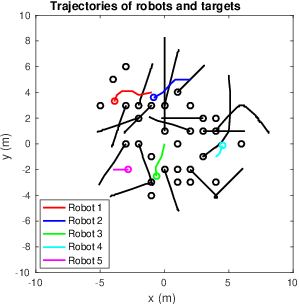}}
\subfigure[Trajectory information obtained by the greedy algorithm.]{\includegraphics[width=0.43\columnwidth]{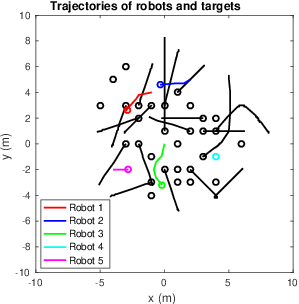}}
\caption{Plot of trajectories of robots and targets applying both local and greedy algorithms to the simulation given in Figure~\ref{fig:gazebo}. Black lines represent trajectories of thirty targets. $\circ$ denotes the end position of trajectories. Both algorithms were performed for 40 seconds on the same target trajectories.  \label{fig:trajectory}}
\end{figure}

\begin{figure}[thpb]
\centering
\includegraphics[scale=0.41]{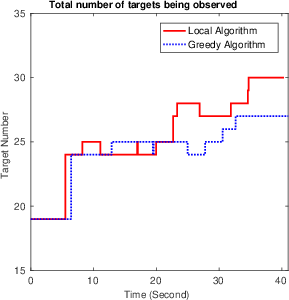}
\caption{Comparison in the total number of targets being observed by any robots between local and greedy algorithms.}
\label{fig:total_num}
\end{figure}





\section{Conclusion} \label{sec:conc}

We present a new approach for multi-robot multi-target assignment. Our work is motivated by scenarios where the robots would like to reduce their communication while still maintaining some guarantees of tracking performance. We used the local communication framework employed by Floreen et al.~\cite{floreen2011local} to leverage an algorithm that can trade-off optimality with communication complexity. We empirically evaluated this algorithm and compared it with the baseline greedy strategy. Our immediate future work is to expand the local algorithm to solve the \WinnerTakesAll version of SATA. Another extension would be to include \emph{search primitives} for exploration, in order to handle situations where some targets fall outside the FoV of all robots.




\bibliographystyle{IEEEtran}
\bibliography{IEEEabrv,yoon_refs}

\appendix
\subsection{Proof of Lemma~\ref{lemma:greedy}}

Let $w(\textbf{t}_j)\triangleq \max\{c_{i,m}^j\big|x_m^i=1,\ \forall i, m\}$. Therefore, the sum of quality of tracking over all targets is:
\begin{equation}
\begin{split}
\sum_{\textbf{t}_j\in T} w(\textbf{t}_j)&=\sum_{\textbf{t}_j\in T}\max\{c_{i,m}^j\big|x_m^i=1,\ \forall i, m\} \\
&=\sum_{\textbf{t}_j\in T}\Big(\sum_{\textbf{r}_i\in R}\max\{\sum_{\textbf{p}_m^i\in P^i}c_{i,m}^j x_m^i\}\Big) \\
&=\sum_{\textbf{t}_j\in T}\Big(\sum_{\textbf{r}_i\in R}y_{i}^{j}\Big(\sum_{\textbf{p}_m^i\in P^i}c_{i,m}^jx_{m}^{i}\Big)\Big).
\label{eqn:sum_w_function}
\end{split}
\end{equation}

Equation~\ref{eqn:sum_w_function} is obtained by taking into account the conditional term of the first equation explicitly.
The last equation follows from the property that $y_i^j$ chooses the maximum value of $\sum_{\textbf{p}_m^i\in P^i}c_{i,m}^jx_{m}^{i}$ among all robots, which is shown in lines 10-13 of Algorithm~\ref{alg:greedy}. Therefore, the last equation is equal to the inner term of Equation~\ref{eqn:objective}.

\end{document}